\documentclass[]{article}
\usepackage{proceed2e}

\usepackage{times}
\usepackage{natbib}
\usepackage{xspace}
\usepackage{amsmath}
\usepackage{amssymb}
\usepackage{amsthm}
\usepackage{algorithm}
\usepackage{algorithmic}
\usepackage[hidelinks]{hyperref}
\usepackage{graphicx}
\usepackage{calc}
\usepackage{subcaption}

\newcommand{\sumT}{\sum_{t = 1}^T}
\newcommand{\sumi}{\sum_{i = 1}^N}
\newcommand{\sumk}{\sum_{k = 1}^{N-2}}
\newcommand{\sumTk}{\sum_{t \in T_k}}
\newcommand{\bpa}[1]{\bigl(#1\bigr)}

\newcommand{\exph}{\texttt{Exp3}\xspace}

\newcommand{\alg}{\texttt{Exp3-Res}\xspace}
\newcommand{\expr}{\texttt{Exp3-R}\xspace}
\newcommand{\expset}{\texttt{Exp3-SET}\xspace}

\newcommand{\exporacle}{\texttt{Oracle}\xspace}
\newcommand{\hedge}{\texttt{Hedge}\xspace}

\newcommand{\loss}{\ell}
\newcommand{\ev}[1]{\left\{#1\right\}}
\newcommand{\EE}[1]{\mathbb{E}\left[ #1\right]}
\newcommand{\pa}[1]{\left(#1\right)}
\newcommand{\F}{\mathcal{F}}
\newcommand{\PPcc}[2]{\mathbb{P}\left[\left.#1\right|#2\right]}
\newcommand{\hloss}{\wh{\loss}}
\newcommand{\wh}{\widehat}
\newcommand{\ti}{_{t,i}}
\newcommand{\EEcc}[2]{\mathbb{E}\left[\left.#1\right|#2\right]}
\newcommand{\hL}{\wh{L}}
\newcommand{\PPt}[1]{\mathbb{P}_t\left[#1\right]}
\newcommand{\EEt}[1]{\mathbb{E}_t\left[ #1\right]}
\newcommand{\PP}[1]{\mathbb{P}\left[#1\right]}
\newcommand{\hLoss}{\widehat{L}}
\newcommand{\etat}{\eta_t}

\newcommand{\wt}{\widetilde}
\newcommand{\cO}{\mathcal{O}}

\newtheorem{lemma}{Lemma}
\newtheorem{theorem}{Theorem}

\newcommand*{\MyDef}{\mathrm{\tiny def}}
\newcommand*{\eqdefU}{\ensuremath{\mathop{\overset{\MyDef}{=}}}}
\newcommand*{\eqdef}{\mathop{\overset{\MyDef}{\resizebox{\widthof{\eqdefU}}{\heightof{=}}{=}}}}

\title{Online learning with Erd\H os--R\'enyi side-observation graphs}

\author{ {\bf Tom\'a\v s Koc\'ak} \\
SequeL team \\
INRIA Lille - Nord Europe \\
\And
{\bf Gergely Neu}  \\
Universitat Pompeu Fabra \\
Barcelona, Spain \\
\And
{\bf Michal Valko}   \\
SequeL team \\
INRIA Lille - Nord Europe \\
}

\begin{document}

\maketitle


\begin{abstract} 
We consider adversarial multi-armed bandit problems where the learner is 
allowed to observe losses of a number of arms
beside the arm that it actually chose. We study the case where all non-chosen arms 
reveal their loss with an \emph{unknown} probability $r_t$, independently of each 
other and the action of the learner. Moreover, we allow $r_t$ to change in every round $t$, 
which rules out the possibility of estimating $r_t$ by a well-concentrated sample average. 
We propose an algorithm which operates under the assumption that $r_t$ is large enough to warrant
at least one side observation with high probability. We show that after $T$ rounds in a bandit 
problem with~$N$ arms, the expected regret of our algorithm is of order
$\cO\Big(\sqrt{\sumT (1/r_t) \log N }\,\,\Big)$, given that $r_t\ge \log T / (2N-2)$ for all $t$. 
All our bounds are within logarithmic factors of the best achievable performance of any algorithm that is even allowed to 
know exact values of $r_t$.
\end{abstract}

\section{INTRODUCTION}

In sequential learning, a learner is repeatedly asked 
to choose an action for which it obtains a loss and
receives a feedback from the environment \citep{cesa-bianchi2006prediction}. We typically study two feedback settings: the learner
either 
observes the losses for all the potential actions (full information) or it
observes only the loss of the action it chose. This latter feedback scheme is known  as the \emph{bandit} setting
(cf.~\citealp{auer2002nonstochastic}). In this paper, instead of 
considering these two limit cases, we study a more refined feedback 
model, known as \textit{bandit with side observations} 
\citep{mannor2011from,alon2013from,kocak2014efficient,kocak2016online}, that generalizes both of 
them. Typical examples for learning with full information and bandit feedback are sequential trading on a stock market 
(where all stock prices are fully observable after each trading period), and electronic advertising (where the learner can only observe 
the clicks on actually shown ads), respectively.
However, advertising in a social network offers a more intricate user feedback 
than captured by the basic bandit model: when proposing an item to a user in 
a social network, the advertiser can often learn about the preferences of the user's 
connections as well. Naturally, the advertiser would want to improve its 
recommendation strategy by incorporating these side observations.

Besides advertising and recommender systems, side observations can also arise 
in sensor networks, where the action of the learner amounts to probing a particular sensor. 
In this setting, each sensor can reveal readings of some other sensors that are in its range. 
When our goal is to sequentially select a sensor maximizing a property of interest, 
a good learning strategy should be able to leverage these side readings. 




In this paper, we follow the formalism of \citet{mannor2011from} who model side observations with a graph structure over the
actions: two actions mutually reveal their losses if they are connected by an edge in the graph in question.
In a realistic scenario this graph is \textit{time dependent} and \textit{unknown} to the learner (e.g., the 
advertiser or the algorithm scheduling sensor readings).
All previous algorithms for the studied setting
\citep{mannor2011from,alon2013from,kocak2014efficient,kocak2016online}
require the environment to reveal a substantial part of a graph, at least 
after the side observations have been revealed. Specifically, these algorithms
require the knowledge of the \textit{second neighborhood} (the set of neighbors of the 
neighbors) of the chosen action in order to update their internal loss estimates.
On the other hand, they are able to handle arbitrary graph structures, 
potentially chosen by an adversary and prove performance guarantees 
 using graph properties based on cliques or independence sets.

The main contribution of our work is a learning algorithm that, unlike previous solutions, does 
\textit{not require the knowledge of the exact graph} underlying the observations, beyond knowing from 
which nodes the side observations came from. Relaxing this assumption, however, has to come with a price: As the very recent results of 
\citet*{CHK16} show, achieving nontrivial advantages from side observations may be impossible without perfectly known side-observation 
graphs when an adversary is allowed to pick \emph{both} the losses and the side-observation graphs. On the positive side, 
\citeauthor{CHK16}\ offer efficient algorithms achieving strong improvements over the standard regret guarantees under the assumption that 
the losses are generated in an i.i.d.~fashion and the graphs may be generated adversarially. Complementing these results,  we consider 
the case of adversarial losses and make the assumption that the side-observation graph in round $t$ is generated from an \emph{Erd\H 
os--R\'enyi} model with an \emph{unknown} and \emph{time-dependent} parameter~$r_t$. The main challenge for the learner is then the 
necessity to exploit the side observations despite not knowing the sequence $(r_t)$. It is easy to see that this model can be equivalently 
understood as each non-chosen arm revealing its loss with probability~$r_t$, independently of all other observations. That said, we still 
find it useful 
to think of the side observations as being generated from an Erd\H{o}s--R\'enyi model, as it allows direct comparisons with the related 
literature. In particular, the case of learning with Erd\H{o}s--R\'enyi side-observation graphs was considered before by 
\citet{alon2013from}: Given \emph{full access} to the underlying graph structure, their algorithm \expset can be shown to guarantee a 
regret bound of $\cO\bpa{\sqrt{\sum_t(1/r_t) (1-(1-r_t)^N) \log N}}$. 
While the assumption of having full access to the graph can be dropped relatively easily in this particular case, exact knowledge of $r_t$ 
seems to be crucial for constructing reliable loss estimates and use them to guide the choice of action in each round. 

It turns out that the problem of estimating $r_t$ while striving to perform efficiently is in fact a major difficulty in our setting.
Indeed, as we allow $r_t$ to change arbitrarily between each round, we cannot rely on any past observations to construct well-concentrated 
estimates of these parameters. That is, the main challenge is estimating $r_t$ from only a handful of samples.
The core technical tool underlying our approach is a direct estimation procedure for the losses that does not estimate $r_t$ explicitly.

Armed with this estimation procedure, we propose a learning algorithm called \alg that guarantees a regret of $\cO(\sqrt{\sum_t (1/r_t)\log N})$, 
provided that $r_t \ge \log T/(2N-2)$ holds for all rounds $t$. This assumption essentially corresponds to requiring that, with high 
probability, at least $1$ side observation is produced in every round, or, in other words, the side-observation graphs encountered are all 
\emph{non-empty}. Notice that for the assumed range of $r_t$'s, our regret bound improves upon the standard regret bound 
of \exph, which is 
of $\cO(\sqrt{NT\log N})$. It is easy to see that when $r_t$ becomes smaller than $1/N$, side observations become unreliable and the bound of 
\exph cannot be improved. That is, if our assumption cannot be verified a priori, then ignoring all side observations and using the \exph 
algorithm of \citet{auer2002nonstochastic} instead can yield a better performance. On the other hand, given that our assumption holds, our bounds 
cannot be significantly improved as suggested by the lower-bound of $\Omega(\sqrt{T/r})$ proved for a static $r$ by \citealt{alon2013from}.


Many other partial-information settings have been studied in previous work. One of the simplest of these settings is 
the label-efficient prediction game considered by~\citet{cesa-bianchi2005minimizing}, where the learner can observe either losses of all 
the actions or none of them, not even the loss of the chosen action. This observation can be queried by the learner at 
most an $\varepsilon<1$ fraction of the total number of rounds, which means no losses are observed in the remaining 
rounds. An even more restricted information setting, label efficient bandit feedback was considered
by~\citet{allenberg2006hannan}, where the learner can only query the loss of the chosen action, instead of all losses 
(see also~\citealp{audibert2010regret}). Algorithms for these two settings have regret of $\widetilde 
\cO(\sqrt{T/\varepsilon)}$ and $\widetilde \cO(\sqrt{NT/\varepsilon)}$, respectively. 
While these bounds may appear very similar to ours, notice that our setting offers a more intricate (and, for some problems, more 
realistic) feedback scheme, which also turns out to be much more challenging to exploit. In another related setting, \citet{seldin2014prediction} 
consider $M$ side observations that the learner can proactively choose in each round without limitations. \citeauthor{seldin2014prediction}~deliver an 
algorithm with regret of $\widetilde \cO(\sqrt{(N/M)T)}$, also proving that choosing $M$ 
observations uniformly at random is minimax optimal; given this sampling scheme, it is not even necessary to observe the loss of the chosen 
action. Their result is comparable to ours and the result by~\citet{alon2013from} for Erd\H os--R\'enyi observation graphs with parameter 
$r=M/N$. However, \citeauthor{seldin2014prediction}~also assume that $M$ is known, which obviates the need for estimating 
$r$. We provide a more technical discussion on the related work in Section~\ref{sec:conc}.

In our paper, we assume that, just like the observation probabilities, the losses are \textit{ad\-ver\-sa\-ri\-al}, that is, they 
can change at each time step without restrictions. Learning with side observations and stochastic losses was studied 
by \citet{caron2012leveraging} and \citet{buccapatnam2014stochastic}. 
While this is an easier setting that the adversarial one, the authors assumed, in both cases, that the graphs have to be known 
in advance. Recently,  \citet{carpentier2016revealing}  studied another stochastic setting where the graph is also not known in advance, however 
their setting considers different feedback and loss structure (influence maximization) which differs from the side-observation setting.

Furthermore, \citet{alon2015online} considered a strictly more difficult setting than ours, where the   
loss of the chosen action may not be a part of the received feedback.

\section{PROBLEM DEFINITION}
We now formalize our learning problem. We consider a sequential interaction 
scheme between a learner and an
environment, where the following steps are repeated in every round 
$t=1,2,\dots,T$:
\begin{enumerate}
 \item The environment chooses $r_t\in[0,1]$ and a loss function over the arms, with $\loss_{t,i}$ 
being the loss associated with arm
$i\in[N] \eqdef \ev{1,2,\dots,N}$ at time $t$.
 \item Based on its previous observations (and possibly some 
randomness), the learner draws an arm $I_t\in
[N]$.
 \item The learner suffers loss $\loss_{t,I_t}$.
 \item For all $i\neq I_t$, $O_{t,i}$ is independently drawn from a Bernoulli 
distribution with mean $r_t$. Furthermore,
$O_{t,I_t}$ is set as $1$.
 \item For all $i\in[N]$ such that $O_{t,i}=1$, the learner observes the loss 
$\loss_{t,i}$.
\end{enumerate}
The goal of the learner is to minimize its total expected losses, or, equivalently, to minimize the
\emph{total expected regret} (or, in short, regret) defined
as
\[
 R_T = \max_{i\in[N]}\EE{\sum_{t=1}^T \pa{\loss_{t,I_t} - \loss_{t,i}}}.
\]
We will denote the interaction history between the learner and the 
environment up to the beginning of round $t$ by~$\F_{t-1}$. 
We also define $p_{t,i} = \PPcc{I_t=i}{\F_{t-1}}$.

The main challenge in our setting is leveraging side observations \emph{without 
knowing $r_t$}. Had we had access to the exact value of $r_t$, we would be able to define the 
following estimate of $\loss_{t,i}$:
\begin{equation}\label{eq:optest}
 \hloss_{t,i}^\star = \frac{O\ti\loss_{t,i}}{p_{t,i} + (1-p_{t,i})r_t} 
\end{equation}
It is easy to see that the loss estimates defined this way are unbiased in the 
sense that
$\EEcc{\hloss_{t,i}}{\F_{t-1}} = \loss_{t,i}$ for all $t$ and~$i$.
It is also straightforward to show that an appropriately tuned instance of the \exph algorithm of 
\citet{auer2002nonstochastic} fed with these loss
estimates is guaranteed to achieve a regret of $\cO(\sqrt{\sum_t(1/r_t)\log N})$ (see also 
\citealt{seldin2014prediction}). 

Then, one might consider a simple algorithm that devotes a number of observations to obtain an 
estimate $\wh{r_t}$ of $r_t$ and plug this estimate into~\eqref{eq:optest}. However, notice that since $r_t$ is allowed to change 
arbitrarily over time, we can only work with a severely limited sample budget for estimating $r_t$: only $N-1$ independent observations! 
Thus, we can obtain only very loose confidence intervals around $r_t$ which translate to even more useless confidence intervals around~$\hloss^\star_{t,i}$. 

Below, we describe a simple trick for obtaining loss estimates that have similar 
properties to the ones defined in~\eqref{eq:optest} without requiring exact knowledge or even explicit 
estimation of $r_t$. Our procedure is based on the geometric resampling method of \citet{neu2013efficient}. To get an
intuition of the method,
let us assume that we have access to the independent geometrically distributed random 
variable $G\ti^\star$ with parameter $o_{t,i} = p_{t,i} + (1-p_{t,i}) r_t$. 
Then, replacing $1/o_{t,i}$ by $G^\star_{t,i}$ in the definition of $\hloss_t^\star$ and ensuring that $G_{t,i}^\star$ is 
independent of $O_{t,i}$, we can obtain an unbiased loss estimate essentially equivalent to $\hloss_t^\star$. 

The challenge posed by this approach is that in our setting, we do not have exact sample access to the 
geometric random variable $G\ti^\star$. In the next section, we describe our algorithm that is based on replacing $G\ti^\star$ 
in the above definition by an appropriate surrogate.

\section{ALGORITHM}\label{sec:R}

\label{sec:alg}


Our algorithm is called \alg and displayed as Algorithm~\ref{alg:algorithm1}. It is based on the
\exph algorithm of \citet{auer2002nonstochastic} and crucially relies on the construction of a 
surrogate $G\ti$ of $G\ti^\star$. Throughout this section, we will assume that $r_t\geq\frac{\log T}{2N-2}$, which implies that the 
probability of having no side observations in round $t$ is of order $1/\sqrt{T}$.

The algorithm is initialized by setting $w_{1,i} = 1/N$ for all
$i\in[N]$, and then performing the updates
\begin{equation}\label{eq:update_R}
 w_{t+1,i} = \frac{1}{N}\exp\pa{-\eta_{t+1}\hL_{t,i}}
\end{equation}
after each round $t$, where $\eta_{t+1}>0$ is a parameter of the algorithm called the \emph{learning rate} in round $t$ and $\hL_{t,i}$  is 
cumulative sum of the loss estimates $\hloss_{s,i}$ up to (and including) time $t$. In round $t$, the learner draws its action $I_t$
such that $I_t=i$ holds with probability $p_{t,i} \propto w_{t,i}$. To simplify some of the notation below, we introduce the shorthand notations 
$\PPt{\cdot} = \PPcc{\cdot}{\F_{t-1}}$ and $\EEt{\cdot} = \EEcc{\cdot}{\F_{t-1}}$.

For any fixed $t,i$, we now describe an efficiently computable surrogate $G\ti$ for the geometrically distributed random variable $G\ti^\star$ 
with parameter $o\ti$ that will be used for constructing our loss estimates. In particular, our strategy will be to construct several 
independent copies $\ev{O'_{t,i}(k)}$ of $O_{t,i}$ and choosing $G_{t,i}$ as the index $k$ of the first copy with $O'_{t,i}(k)=1$. It is 
easy to see that with infinitely many copies, we could exactly recover $G_{t,i}^\star$; our actual surrogate is going to be weaker thanks to 
the smaller sample size. For clarity of notation, we will omit most  explicit references to $t$ and $i$, with the understanding that all 
calculations need to be independently executed for all pairs $t,i$. 

Let us now describe our mechanism for constructing the copies $\ev{O'(k)}$.
Since we need 
independence of $G\ti$ and $O\ti$ for our estimates, we use only side observations from actions $[N]\setminus\ev{I_t,i}$. First, let's 
define $\sigma$ as a uniform random permutation of $[N]\setminus\ev{I_t,i}$.
For all $k\in[N-2]$, we define $R(k) = O_{t,\sigma(k)}$. Note that due to the construction, $\{R(k)\}_{k = 1}^{N-2}$ are pairwise 
independent Bernoulli random variables with parameter $r_t$, independent of $O\ti$. Furthermore, knowing $p\ti$ we can define 
$P(1),\,\ldots,\,P(N-2)$ as pairwise 
independent Bernoulli random variables with parameter $p\ti$. Using $P(k)$ and $R(k)$ we define the random variable $O'(k)$ as 
\[
O'(k) = P(k) + (1-P(k))R(k)
\]
for all $k\in[N-2]$. Using independence of all previously defined random variables, it is easy to check that the 
variables $\{O'(k)\}_{k = 1}^{N-2}$ are pairwise independent Bernoulli random 
variables with expectation $o\ti = p\ti + (1-p\ti)r_t$. Now we are ready to define $G\ti$ as
\begin{equation}\label{eq:est_G}
G\ti = \min\ev{k\in[N-2]:O(k)'=1}\cup\ev{N-1}.
\end{equation}
The following lemma states some properties of $G\ti$.
\begin{lemma} \label{lem:expectationsOfG} For any value of $g$ we have
\begin{align*}
\EE{G\ti} &= \frac{1}{o\ti} - \frac{1}{o\ti}(1-o\ti)^{N-1} \\
\EE{G\ti^2} &= \frac{2-o\ti}{o\ti^2} + \frac{1}{o^2\ti}(1-o\ti)^{N-2}\times\\
&\quad \times\Big(o\ti^2+o\ti-2+2o\ti(N-2)(o\ti-1)\Big)
\end{align*}
\end{lemma}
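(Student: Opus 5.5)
The plan is to compute the exact distribution of $G\ti$ and then evaluate the two moments through tail sums. Write $o=o\ti$ and $q=1-o$. One preliminary point: the event $\{G\ti>k\}$ is the event $\{O'(1)=\dots=O'(k)=0\}$, so pairwise independence is not enough. We need the $O'(k)$ to be mutually independent. This holds if the $P(k)$ are drawn mutually independent, which is the intended construction. The $R(k)$ are already mutually independent, since they are side observations of distinct arms. Granting this,
\[
\PP{G\ti>k}=q^k \quad\text{for } k=0,1,\dots,N-2,
\]
and $\PP{G\ti>N-2}=q^{N-2}$, while $G\ti\le N-1$ always. In other words, $G\ti$ is a geometric variable with parameter $o$ truncated at $N-1$. (The phrase ``for any value of $g$'' in the statement plays no role.)

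For the first moment, use $\EE{G\ti}=\sum_{k\ge 0}\PP{G\ti>k}$. This gives
\[
\EE{G\ti}=\sum_{k=0}^{N-2}q^k=\frac{1-q^{N-1}}{o},
\]
which is exactly the claimed expression $\frac{1}{o}-\frac{1}{o}(1-o)^{N-1}$.

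For the second moment, use the analogous identity $\EE{G^2}=\sum_{k\ge0}(2k+1)\PP{G>k}$, which follows from $G^2=\sum_{k<G}(2k+1)$. With $M=N-2$ this gives
\[
\EE{G\ti^2}=2\sum_{k=0}^{M}kq^k+\sum_{k=0}^{M}q^k .
\]
Insert the standard finite sums $\sum_{k=0}^M q^k=(1-q^{M+1})/o$ and
\[
\sum_{k=0}^{M}kq^k=\frac{q\bigl(1-(M+1)q^M+Mq^{M+1}\bigr)}{o^2}.
\]
The $q^M$-free part is $\frac{2q}{o^2}+\frac{1}{o}=\frac{2-o}{o^2}$, which matches the leading term of the claim.

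The remaining step, and the only real work, is to check that the terms carrying $q^{N-2}$ collect to the claimed bracket. Those terms are
\[
\frac{q^{M}}{o^2}\Bigl(-2q(M+1)+2Mq^2-oq\Bigr).
\]
Substitute $q=1-o$ and expand. For the check it helps to regroup the bracket as $-2q+2Mq(q-1)-oq$. Here $-2q-oq=o^2+o-2$, and $2Mq(q-1)=2o(N-2)(o-1)$. This reproduces $o^2+o-2+2o(N-2)(o-1)$ exactly. The only obstacle is this bookkeeping, and regrouping with $q-1=-o$ first keeps it short.
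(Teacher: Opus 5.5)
Your proposal is correct; the algebra checks out, including the regrouping $-2q-oq=o^2+o-2$ and $2Mq(q-1)=2o(N-2)(o-1)$. Like the paper, you compute the moments explicitly from the truncated geometric law of $G\ti$, but you organize the sums differently.

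The paper writes each moment through the probability mass function, as $\sum_{k=1}^{N-2}k^j o(1-o)^{k-1}$ plus the atom $(N-1)^j(1-o)^{N-2}$ for $j=1,2$, and leaves the simplification to the reader. You instead use the tail-sum identities $\EE{G}=\sum_{k\ge0}\PP{G>k}$ and $\EE{G^2}=\sum_{k\ge0}(2k+1)\PP{G>k}$. This absorbs the truncation atom automatically and makes the first moment a single geometric series. It also reduces the second moment to one arithmetico-geometric sum, rather than a $\sum k^2q^{k-1}$ sum plus a correction term. The result is a short, fully checkable derivation where the paper only sketches one.

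Your independence remark is well taken, and it applies equally to the paper's argument. The mass function $\PP{G\ti=k}=o(1-o)^{k-1}$ that the paper's sums presuppose needs the $O'(k)$ to be mutually independent, not merely pairwise independent. So the word ``pairwise'' in the paper's construction is too weak. The $R(k)$ are mutually independent by the observation model, so it suffices that the algorithm draws the $P(k)$ i.i.d.\ and independently of everything else. All of this should be read conditionally on $\F_{t-1}$, where $o\ti$ is fixed.
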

\begin{proof}
The proof follows directly from using the definition of $G\ti$ and simplifying the sums
\begin{align*}
\EE{G\ti} &= \sumk \left[ko\ti(1-o\ti)^{k-1}\right] + \\
&\qquad\qquad+ \left(N-1\right)(1-o\ti)^{N-2},
\\
\EE{G^2\ti} &= \sumk \left[k^2o\ti(1-o\ti)^{k-1}\right] + \\
&\qquad\qquad+ \left(N-1\right)^2(1-o\ti)^{N-2}.
\end{align*}
\end{proof}
Using Lemma \ref{lem:expectationsOfG}, it is easy to see that $G\ti$ follows a truncated geometric law in the sense that
\[
 \PP{G\ti = m} = \PP{\min\ev{G\ti^\star,N-1} = m}
\]
holds for all $m\in[N-1]$. Using all this notation, we construct an estimate of $\loss_{t,i}$ as
\begin{equation}\label{eq:est_R}
 \hloss_{t,i} = G_{t,i} O_{t,i} \loss_{t,i}.
\end{equation}
\begin{algorithm}[t]
\caption{\alg}
\label{alg:algorithm1}
\begin{algorithmic}[1]
\STATE \textbf{Input:}
\STATE Set of actions $[N]$.
\STATE \textbf{Initialization:}
\STATE $\hLoss_{0,i}\gets 0$ for $i\in[N]$.
\STATE \textbf{Run:}
\FOR{$t = 1$ {\bfseries to} $T$}
\STATE $\eta_{t} \gets \sqrt{\log N\Big/\left(N^2+\sum_{s=1}^{t-1} \sumi p_{s,i}(\hloss_{s,i})^2\right)}$. 
\STATE  $w\ti\gets (1/N)\exp(-\etat\hLoss_{t-1,i})$ for $i\in[N]$.
\STATE $W_t \gets \sumi w\ti$.
\STATE $p\ti \gets w\ti/W_t$.
\STATE Choose $I_t \sim p_t = (p_{t,1},\dots,p_{t,N})$.
\STATE Receive the observation set $O_t$.
\STATE Receive the pairs $\{i,\loss\ti\}$ for all $i$ s.t.~$O\ti = 1$.
\STATE Compute $G\ti$ for all $i\in[N]$ using~\eqref{eq:est_G}.
\STATE $\hloss\ti \gets \loss\ti O\ti G\ti$ for all $i\in[N]$.
\STATE $\hLoss\ti = \hLoss_{t-1,i} + \hloss\ti$ for all $i\in[N]$.
\ENDFOR
\end{algorithmic}
\end{algorithm}
The rationale underlying this definition of $G_{t,i}$ is rather delicate. First, note that $p\ti$ is deterministic  given the history 
$\F_{t-1}$ and therefore, does not depend on
$O\ti$. Second, $O_{t,i}$ is also independent of $O_{t,j}$ for $j\not\in\{i,I_t\}$. As a result, $G\ti$ is independent of $O\ti$, and we 
can use the identity $\EEt{G_{t,i}O_{t,i}} = \EEt{G_{t,i}}\EEt{O_{t,i}}$. 
The next lemma relates the loss estimates \eqref{eq:est_R} to the true losses, relying on the observations above and the 
assumption $r_t\ge \frac{\log T}{2N-2}$. 
\begin{lemma}\label{lemma:lbias}
 Assume $r_t\ge \frac{\log T}{2N-2}$. Then, for all $t$ and $i$,
  \[
  0 \leq \loss_{t,i} - \EEt{\hloss_{t,i}} \le \frac{1}{\sqrt{T}}.
 \]
\end{lemma}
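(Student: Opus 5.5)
We compute $\EEt{\hloss_{t,i}}$ exactly using the independence of $G\ti$ and $O\ti$ given $\F_{t-1}$. Since $\loss_{t,i}$ is fixed given $\F_{t-1}$ and $\EEt{O\ti} = p\ti + (1-p\ti)r_t = o\ti$, we get $\EEt{\hloss_{t,i}} = \loss_{t,i}\, o\ti\, \EEt{G\ti}$. Lemma~\ref{lem:expectationsOfG} applies conditionally on $\F_{t-1}$, because $p\ti$ is $\F_{t-1}$-measurable and the copies $O'(k)$ are Bernoulli with parameter $o\ti$. It gives $o\ti\EEt{G\ti} = 1 - (1-o\ti)^{N-1}$. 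Hence
\[
\loss_{t,i} - \EEt{\hloss_{t,i}} = \loss_{t,i}\,(1-o\ti)^{N-1}.
\]

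The lower bound is immediate, since $\loss_{t,i}\ge 0$ and $o\ti\in[0,1]$. For the upper bound we use $\loss_{t,i}\le 1$ together with $o\ti \ge r_t$, which holds because $o\ti$ is a convex combination of $1$ and $r_t$. This gives $(1-o\ti)^{N-1} \le (1-r_t)^{N-1} \le \exp\bpa{-r_t(N-1)}$. Under the assumption $r_t\ge \log T/(2N-2)$ this is at most $\exp(-\tfrac12\log T) = 1/\sqrt{T}$.

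The only delicate step is justifying the conditional use of Lemma~\ref{lem:expectationsOfG}. Its derivation uses $\PP{G\ti = k} = o\ti(1-o\ti)^{k-1}$, which requires the $O'(k)$ to be mutually independent and not merely pairwise independent. This holds by construction: the $R(k)$ are distinct coordinates $O_{t,\sigma(k)}$, which are independent Bernoulli variables given $\sigma$ and $\F_{t-1}$, and the $P(k)$ are drawn independently. The same construction makes $G\ti$ independent of $O\ti$, since only indices outside $\{I_t,i\}$ are used. A further subtlety is that $I_t$ is random, so conditioning on $I_t$ first is cleanest. Given $I_t$ and $\F_{t-1}$, $O\ti$ has mean $\mathbb{1}[I_t=i] + \mathbb{1}[I_t\ne i]\,r_t$, while $G\ti$ has the same truncated geometric law with parameter $o\ti$ regardless of $I_t$, because $p\ti$ does not depend on $I_t$. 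The conditional expectation therefore factorises as $\EEt{G\ti}\,\EEt{O\ti}$, and averaging over $I_t$ recovers $o\ti$.

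The rest is routine; only the elementary inequality $1-x\le e^{-x}$ is needed.
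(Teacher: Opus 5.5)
Your proof is correct and matches the paper's argument: you factor $\EEt{G\ti O\ti}=\EEt{G\ti}\,o\ti$ by independence, apply Lemma~\ref{lem:expectationsOfG} to get $\loss\ti-\EEt{\hloss\ti}=\loss\ti(1-o\ti)^{N-1}$, and bound this by $(1-r_t)^{N-1}\le e^{-r_t(N-1)}\le 1/\sqrt{T}$. Your extra remarks are also sound and make explicit what the paper leaves implicit: the truncated geometric law needs mutual, not merely pairwise, independence of the $O'(k)$, and conditioning on $I_t$ is what justifies the factorisation.
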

\begin{proof}
Fix an arbitrary $t$ and $i$. Using Lemma \ref{lem:expectationsOfG} along with $\EEt{O_{t,i}} = o_{t,i}$ and the independence of $G_{t,i}$ and $O_{t,i}$, we get 
 \[
 \EEt{\hloss\ti} = \EEt{G_{t,i} O_{t,i} \loss_{t,i}} = \loss_{t,i} - \loss\ti(1-o\ti)^{N-1},
 \]
which immediately implies the lower bound on $\ell_{t,i} - \EEt{\hloss_{t,i}}$.
For proving the upper bound, observe that
\begin{align*}
&\loss\ti(1-o\ti)^{N-1}\leq(1-r_t)^{N-1} \leq e^{-r_t (N-1)} \leq\frac{1}{\sqrt{T}}
\end{align*}
holds by our assumption on $r_t$, where we used the elementary inequality $1-x\le e^{-x}$ that holds for all $x\in\mathbb{R}$.
\end{proof}

The next theorem states our main result concerning \alg with an adaptive learning rate.
\begin{theorem}
\label{thm:alg}
Assume that $r_t\ge \frac{\log T}{2N-2}$ holds for all $t$ and set
\[
 \eta_{t} = \sqrt{\frac{\log N }{N^2+\sum_{s = 1}^{t-1} \sumi p_{s,i}(\hloss_{s,i})^2}}.
\]
Then, the expected regret of \alg satisfies
\[
R_T\leq 2\sqrt{\pa{N^2 +  \sumT\frac{1}{r_t}}\log N} + \sqrt{T}.
\]
\end{theorem}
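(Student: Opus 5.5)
The plan is to combine the standard adaptive-learning-rate analysis of exponential weights with nonnegative loss estimates with Lemma~\ref{lemma:lbias} and a second-moment bound on $\hloss_{t,i}$.

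\textbf{Step 1: deterministic regret bound.} Since $\hloss_{t,i}\ge 0$ and $\eta_t$ is nonincreasing, the usual potential argument (using $e^{-x}\le 1-x+x^2/2$ for $x\ge 0$ and tracking $\frac{1}{\eta_t}\log W_t$) should give, for every fixed arm $j$,
\[
\sumT\sumi p_{t,i}\hloss_{t,i}-\hL_{T,j}\le \frac{\log N}{\eta_{T+1}}+\sumT\frac{\eta_t}{2}\sumi p_{t,i}\hloss_{t,i}^2 .
\]
Write $S_t=N^2+\sum_{s\le t}\sumi p_{s,i}\hloss_{s,i}^2$, so that $\eta_t=\sqrt{\log N/S_{t-1}}$. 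Each increment is at most $\max_i \hloss_{t,i}^2\le (N-1)^2\le N^2$, because $G_{t,i}\le N-1$. Hence $S_t\le 2S_{t-1}$, and the standard $\sum_t a_t/\sqrt{\sum_{s<t}a_s}$ lemma bounds the second term by $\sqrt{\log N}\,\sqrt{S_T}$ up to a constant. The $N^2$ offset is what makes this comparison with $S_{t-1}$ legitimate. Together with the first term, the right-hand side is $\cO\bpa{\sqrt{S_T\log N}}$. I expect this constant-chasing to be the main technical obstacle, specifically obtaining the factor~$2$ exactly.

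\textbf{Step 2: expectations.} Taking expectations, Jensen's inequality gives $\EE{\sqrt{S_T}}\le\sqrt{\EE{S_T}}$. For the second moment, Lemma~\ref{lem:expectationsOfG} and independence of $G_{t,i}$ and $O_{t,i}$ give $\EEt{\hloss_{t,i}^2}\le o_{t,i}\EEt{G_{t,i}^2}$. Since $G_{t,i}$ is stochastically dominated by the geometric variable $G^\star_{t,i}$, we have $\EEt{G^2_{t,i}}\le (2-o_{t,i})/o_{t,i}^2\le 2/o_{t,i}$. Therefore
\[
\EEt{\sumi p_{t,i}\hloss_{t,i}^2}\le \sumi \frac{2p_{t,i}}{o_{t,i}}\le \frac{2}{r_t},
\]
using $o_{t,i}\ge r_t$. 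This yields $\EE{S_T}\le N^2+2\sumT 1/r_t$. I would try to sharpen the factor $2$ (for instance via $p_{t,i}/o_{t,i}\le 1$ bounds) if the claimed constant requires it.

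\textbf{Step 3: bias.} On the left-hand side, $\EEt{\sumi p_{t,i}\hloss_{t,i}}\ge\EEt{\loss_{t,I_t}}-1/\sqrt T$ by Lemma~\ref{lemma:lbias}. Since $p_{t,i}$ is $\F_{t-1}$-measurable, summing over $t$ costs at most $\sqrt T$ in total. Also $\EE{\hL_{T,j}}\le \sumT\loss_{t,j}$ by the lower inequality of Lemma~\ref{lemma:lbias}. Combining the three steps gives $R_T\le 2\sqrt{(N^2+\sumT 1/r_t)\log N}+\sqrt T$.
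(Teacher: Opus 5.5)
\paragraph{Assessment.}
Your route is the paper's route. Both arguments use the same potential argument for exponential weights with nonincreasing $\eta_t$, handle the bias through Lemma~\ref{lemma:lbias}, bound $\EEt{b_t}\le 2/r_t$ for $b_t=\sumi p_{t,i}\hloss_{t,i}^2$, and finish with Jensen. Your coupling $G_{t,i}\le G^\star_{t,i}$ is a cleaner way to get $\EEt{G_{t,i}^2}\le (2-o_{t,i})/o_{t,i}^2$ than the paper's sign check. Your $e^{-x}\le 1-x+x^2/2$ and $-\sqrt T$ bias term are also the correct forms; the paper's text has $1-x+x^2$ and $+\sqrt T$ at those points. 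One slip: $(2-o)/o^2\le 2/o$ is false for $o<2/3$. What you actually need and use is $o\,\EEt{G^2}\le (2-o)/o\le 2/o$.

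\paragraph{The gap: the final constant.}
The stated constant does not follow from your steps, and there are two separate reasons.

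\emph{The fixable part.} Your comparison $S_{t-1}\ge S_t/2$ costs a factor $\sqrt2$ on $\sumT\eta_t b_t/2$. The paper avoids this loss. Since $b_t\le N^2$, it gets $N^2+\sum_{s<t}b_s\ge\sum_{s\le t}b_s$, so Lemma~\ref{lem:lem1} gives
\[
\sumT\eta_t b_t/2\le\sqrt{\log N\sumT b_t}\le\sqrt{S_T\log N}.
\]
Together with $\log N/\eta_{T+1}=\sqrt{S_T\log N}$, the pathwise bound is exactly $2\sqrt{S_T\log N}$. That is where the leading $2$ comes from.

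\emph{The part that cannot be fixed on this route.} The $2$ in $\EEt{b_t}\le 2/r_t$ is essentially tight. Take $t=1$ with $p_1$ uniform, unit losses, fixed $r_1$ and large $N$. Then
\[
\EE{b_1}\approx 2N/(1+r_1(N-1))-1\approx 2/r_1-1,
\]
so no sharpening via $p_{t,i}/o_{t,i}\le 1$ will remove it.

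\emph{What is actually proved.} Your argument, like the paper's, proves $R_T\le 2\sqrt{(N^2+2\sumT 1/r_t)\log N}+\sqrt T$. The paper reaches the displayed form only by silently dropping this factor $2$ between \eqref{eq:l2bound} and \eqref{eq:secterm}. Your closing sentence therefore overclaims in exactly the way the paper does. The discrepancy is a constant factor (at most $\sqrt2$ on the main term) and does not affect the $\cO\bpa{\sqrt{\sumT(1/r_t)\log N}}$ rate.
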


\section{PROOF OF THEOREM \ref{thm:alg}}

In this section, we present details of the proof of Theorem~\ref{thm:alg} but first, we state an auxiliary lemma.

\begin{lemma}[Lemma~3.5 of \citealp{auer2002adaptive}]
\label{lem:lem1}
Let $b_1$, $b_2$, \dots, $b_T$ be non-negative real numbers. Then 
\[
\sumT\frac{b_t}{\sqrt{\sum_{s=1}^t b_s}} \leq 2\sqrt{\textstyle\sumT b_t}.
\]
\end{lemma}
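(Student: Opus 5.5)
The plan is a telescoping argument on the partial sums. Write $S_t = \sum_{s=1}^t b_s$ for $t=1,\dots,T$, and set $S_0=0$. Since the $b_t$ are non-negative, the sequence $S_0\le S_1\le\dots\le S_T$ is non-decreasing. I would first dispose of degenerate terms. If $S_t=0$, then $b_t=0$, and the corresponding summand is $0/0$; I will adopt the usual convention that such a summand equals $0$. These rounds then contribute nothing to the left-hand side, so it suffices to handle the rounds with $S_t>0$.

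The key step is a per-round inequality: for every $t$ with $S_t>0$,
\[
\frac{b_t}{\sqrt{S_t}} \le 2\pa{\sqrt{S_t}-\sqrt{S_{t-1}}}.
\]
To prove it, write $b_t = S_t - S_{t-1}$ and factor this as a difference of squares,
\[
S_t - S_{t-1} = \pa{\sqrt{S_t}-\sqrt{S_{t-1}}}\pa{\sqrt{S_t}+\sqrt{S_{t-1}}}.
\]
Monotonicity gives $\sqrt{S_t}+\sqrt{S_{t-1}}\le 2\sqrt{S_t}$. The factor $\sqrt{S_t}-\sqrt{S_{t-1}}$ is non-negative, so multiplying the previous bound by it and dividing by $\sqrt{S_t}$ yields the claim. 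An equivalent way to see it is the integral comparison $b_t/\sqrt{S_t}\le\int_{S_{t-1}}^{S_t} x^{-1/2}\,dx$, which holds because $x^{-1/2}$ is decreasing.

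Finally, sum the per-round inequality over $t$. The right-hand side telescopes to $2\pa{\sqrt{S_T}-\sqrt{S_0}} = 2\sqrt{\sumT b_t}$. For rounds with $S_t=0$, the summand is $0$ and the corresponding right-hand term $2\pa{\sqrt{S_t}-\sqrt{S_{t-1}}}$ is also $0$, so including them leaves the telescoping sum unchanged. There is no real obstacle in this argument; the only delicate point is the convention for terms where $S_t=0$.
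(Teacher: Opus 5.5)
Your proof is correct and follows the paper's argument. Both establish the per-round bound $b_t/\sqrt{S_t}\le 2(\sqrt{S_t}-\sqrt{S_{t-1}})$ and then telescope; the paper derives it from $x/2\le 1-\sqrt{1-x}$ with $x=b_t/S_t$, which is your difference-of-squares step in another form (the paper's displayed intermediate inequality omits the factor $2$, which your version correctly keeps).
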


\begin{proof}
The proof is based on the inequality $x/2 \leq 1-\sqrt{1-x}$ for $x\leq1$. Setting $x = b_t/\sum_{s=1}^t b_s$ and 
multiplying both sides of the inequality by $\sqrt{\sum_{s=1}^t b_s}$ we get
\[
\frac{b_t}{\sqrt{\sum_{s = 1}^t b_t}} \leq \sqrt{\textstyle\sum_{s=1}^t b_s}-\sqrt{\textstyle\sum_{s=1}^t b_s-b_t}.
\]
The proof is concluded by summing over $t$.
\end{proof}
%

The first part of the analysis 
follows the proof of Lemma~1 by \citet{gyorfi2007sequential}. Defining $W'_{t} = \frac 1N\sumi e^{-\eta_{t-1}\hL_{t-1,i}}$, we get
\begin{align}
&\nonumber\frac{1}{\eta_t}\log\frac{W'_{t+1}}{W_t}=\frac{1}{\eta_t}\log\sumi\frac{\frac 1Ne^{-\eta_t\hL_{t-1,i}} e^{-\eta_t\hloss\ti}}{W_t} \\
&\nonumber\quad =\frac{1}{\eta_t}\log\sumi p\ti e^{-\eta_t\hloss\ti}	\\\label{eq:tailor}
&\quad \leq
\frac{1}{\eta_t}\log\sumi p\ti\left(1-\eta_t\hloss\ti+
(\eta_t\hloss\ti)^2\right)	\\
&\nonumber\quad =\frac{1}{\eta_t}\log\left(1-\eta_t\sumi p\ti\hloss\ti
+\eta_t^2\sumi p\ti(\hloss\ti)^2\right), \
\end{align}
where in (\ref{eq:tailor}), we used the inequality $\exp(-x) \le 1 - x + x^2$ that holds for 
$x\ge -1$. Further, we used the inequality $\log(1-x)\leq-x$, 
which holds for all $x\leq 1$, to upper bound the last term.

Using $\eta_{t+1}\leq \eta_t$ and Jensen's inequality, we get
\begin{align*}
W_{t+1} &= \sumi\frac{1}{N}e^{-\eta_{t+1}\hL_{t,i}} =
\sumi\frac{1}{N}\left(e^{-\eta_{t}\hL_{t,i}}\right)^{\frac{\eta_{t+1}}{
\eta_t}}\\
&\leq\left(\sumi\frac{1}{N}e^{
-\eta_{t}\hL_{t,i}}\right)^{\frac{\eta_{t+1}}{\eta_t}} =
(W'_{t+1})^{\frac{\eta_{t+1}}{\eta_t}},
\end{align*}
which, together with the last inequality, gives us 
\[
\sumi{p}\ti\hloss\ti\leq\frac{{\eta}_t}{2}\sumi {p}\ti\pa{\hloss\ti}^2 + 
\left[\frac{\log {W}_t}{{\eta}_t}-\frac{\log {W}_{t+1}}{{\eta}_{t+1}}\right]
\]
for every $t\in[T]$. Taking expectations and summing over time, we get
\begin{align*}
\EE{\sumT\sumi{p}\ti\hloss\ti}&\leq \EE{\sumT\frac{{\eta}_t}{2}\sumi {p}\ti\pa{\hloss\ti}^2}	\\
& + \EE{\sumT\left(\frac{\log {W}_t}{{\eta}_t}-\frac{\log 
{W}_{t+1}}{{\eta}_{t+1}}\right)}.
\end{align*}

The goal of the second part of the analysis is to construct bounds for each of the three expectations in the previous inequality.
For the term on the left-hand side, we use Lemma~\ref{lemma:lbias} to get the lower-bound
\[
\EE{\sumT\sumi{p}\ti\hloss\ti} \geq \sumT\sumi p\ti\ell\ti + \sqrt{T}.
\]

Note that this is the only step in the analysis where the actual magnitude (and not just the sign) of the bias of the loss estimates shows up. 
Anything bigger than $\sqrt{T}$ would degrade our final regret bound.

We are left with bounding the two terms on the right-hand side.
To simplify some notation below, let us define $b_t = \sumi p\ti(\hloss\ti)^2$.
By our definition of $\etat$ and the help of Lemma~\ref{lem:lem1}, we can bound the first term on the right hand side as
\begin{align*}
\EE{\sumT\frac{{\eta}_t b_t}{2}} & = \EE{\sumT\frac{b_t\sqrt{\log N}}{2\sqrt{N^2+\sum_{s = 1}^{t-1}b_s}}}\\
&\leq \EE{\sqrt{\pa{N^2 + \textstyle\sumT b_t}\log N}}	
\\
&\leq \sqrt{\pa{N^2 + \textstyle\sumT \EE{b_t}}\log N},
\end{align*}
where we also used the fact that $N^2 \ge b_t$ and Jensen's inequality in the last line. We continue by bounding $\EE{b_t}$:
\begin{equation}\label{eq:l2bound}
\begin{split}
 \EEt{\sum_{i=1}^N p_{t,i} (\hloss_{t,i})^2} &= \sumi p\ti \loss\ti^2 \EEt{O\ti G\ti^2} 
 \\
 &\le \sumi p\ti o_{t,i} \frac{2-o\ti}{o\ti^2}\le \frac {2}{r_t},
\end{split}
\end{equation}
where we used $o\ti\ge r_t$ together with the second part of Lemma \ref{lem:expectationsOfG} which gives us

\begin{align*}
\EEt{G_{t,i}^2} &=  \frac{2-o\ti}{o\ti^2} + \frac{1}{o\ti^2}(1-o\ti)^{N-2}\times\\
&\quad\times\Big(o\ti^2+o\ti-2+2o\ti(N-2)(o-1)\Big)\\
&\leq\frac{2-o\ti}{o\ti^2},
\end{align*}

since both $o\ti^2+o\ti-2$ and $2o\ti(N-2)(o-1)$ are non-positive. Thus, we obtain
\begin{equation}\label{eq:secterm}
 \EE{\sumT\frac{{\eta}_t b_t}{2}} \le 
 \sqrt{\pa{\sumT\frac{1}{r_t} + N^2} \log N}.
\end{equation}

Finally, using $W_1 = 1$, the sum in the last expectation telescopes to 
\begin{align*}
\EE{\sumT\left(\frac{\log {W}_t}{{\eta}_t}-\frac{\log {W}_{t+1}}{{\eta}_{t+1}}\right)} = 
\EE{-\frac{\log {W}_{T+1}}{{\eta}_{T+1}}}.
\end{align*}
Using the definition of $W_t$, we get that
\begin{align*}
\EE{-\frac{\log {W}_{T+1}}{{\eta}_{T+1}}} &\leq \EE{-\frac{\log {w}_{T+1,j}}{{\eta}_{T+1}}} 	\\
&\leq \EE{\frac{\log N}{\eta_{T+1}}}+\EE{\hL_{T,j}}
\end{align*}
holds for any arm $j\in [N]$.
Now note that the first term can be bounded by using the definition of $\eta_{T+1}$ with the help of ~\eqref{eq:l2bound} and Jensen's inequality.
Using $\EEt{\hloss_{t,i}} \le \loss_{t,i}$ from Lemma~\ref{lemma:lbias}
and combining everything together, we obtain the regret bound
\begin{align*}
R_{T} &= \EE{\sumT p_{t,i} \loss_{t,i}}-\min_{j\in [N]}\EE{\sumTk\loss_{t,j}}
\\
&\leq 2\sqrt{\left(N^2 + \sumT\frac{1}{r_t}\right)\log N} + \sqrt{T}.
\end{align*}

\section{EXPERIMENTS}
\label{sec:experiments}

\begin{figure*}[t]
	\centering
	\begin{subfigure}[t]{0.32\textwidth}
		\centering
		\includegraphics[width = 1\textwidth]{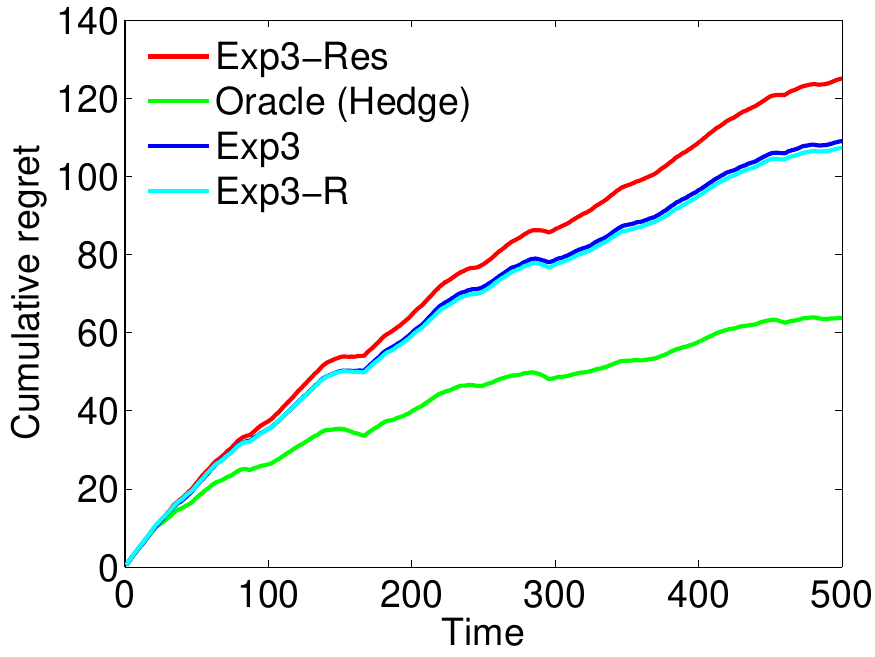}
		\caption{Static sequence $(r_t)_t^T$, $r_t = 0$}
	\end{subfigure}	
	\
	\begin{subfigure}[t]{0.32\textwidth}
		\centering
		\includegraphics[width = 1\textwidth]{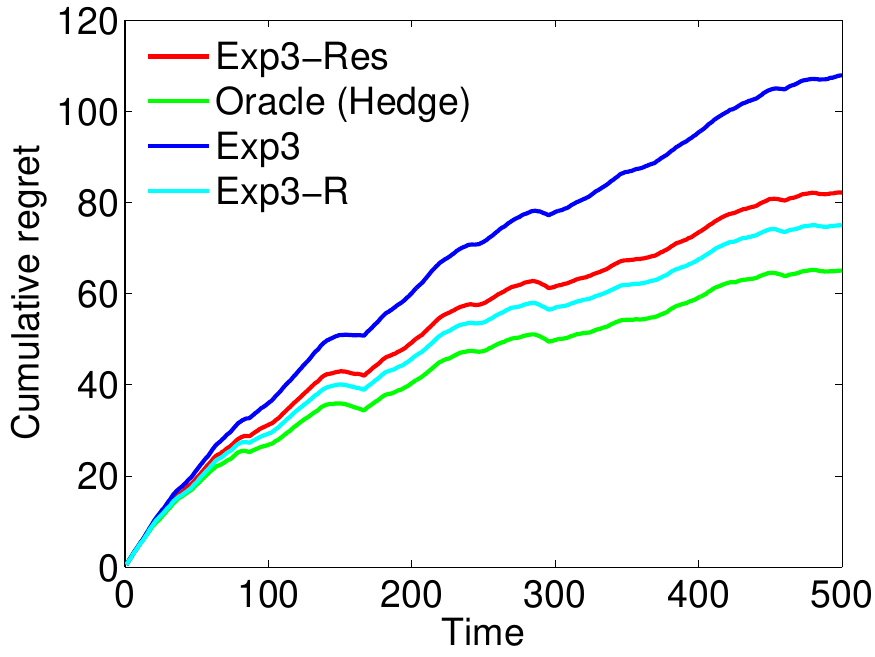}
		\caption{Static sequence $(r_t)_t^T$, $r_t = 0.06\approx \log(T)/(2N-2)$}
	\end{subfigure}
	\
	\begin{subfigure}[t]{0.32\textwidth}
		\centering
		\includegraphics[width = 1\textwidth]{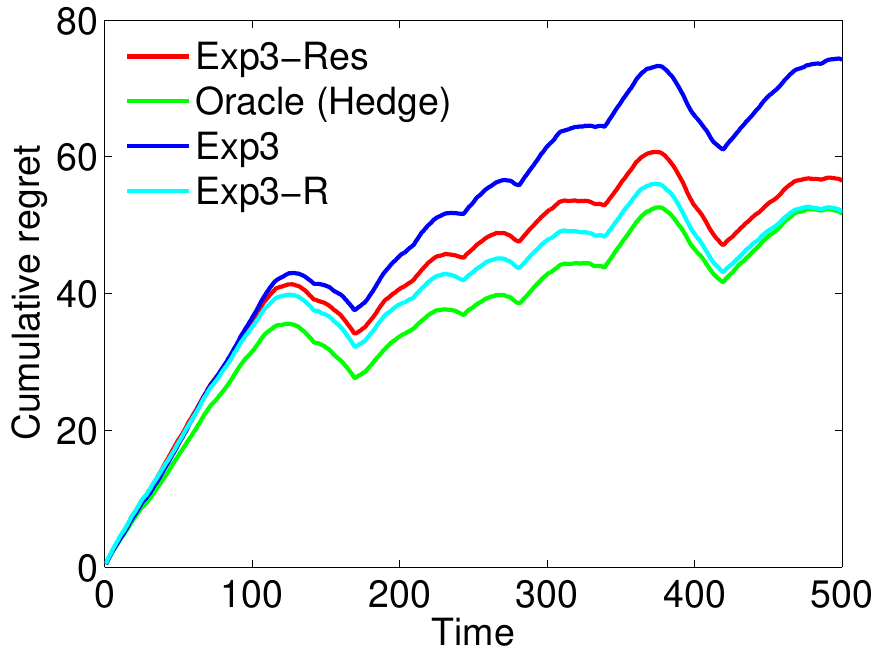}
		\caption{Changing sequence $(r_t)_t^T$ with uniformly distributed $r_t$ on $[0,0.2]$}
	\end{subfigure}
	\vskip 2em
	\centering
	\begin{subfigure}[t]{0.32\textwidth}
		\centering
		\includegraphics[width = 1\textwidth]{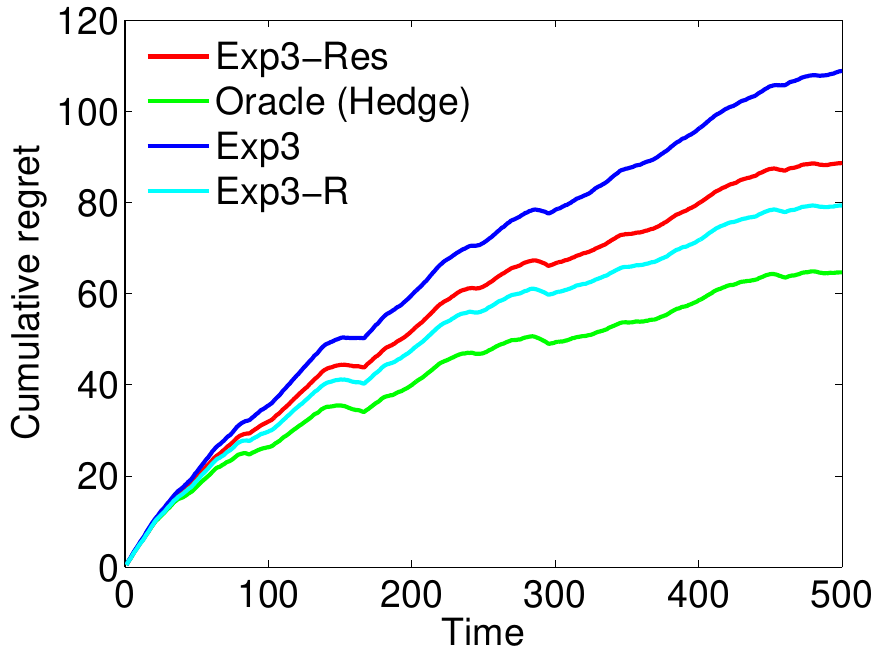}
		\caption{Sequence $(r_t)_t^T$ generated as a random walk on $[0,0.1]$}
	\end{subfigure}		
	\
	\begin{subfigure}[t]{0.32\textwidth}
		\centering
		\includegraphics[width = 1\textwidth]{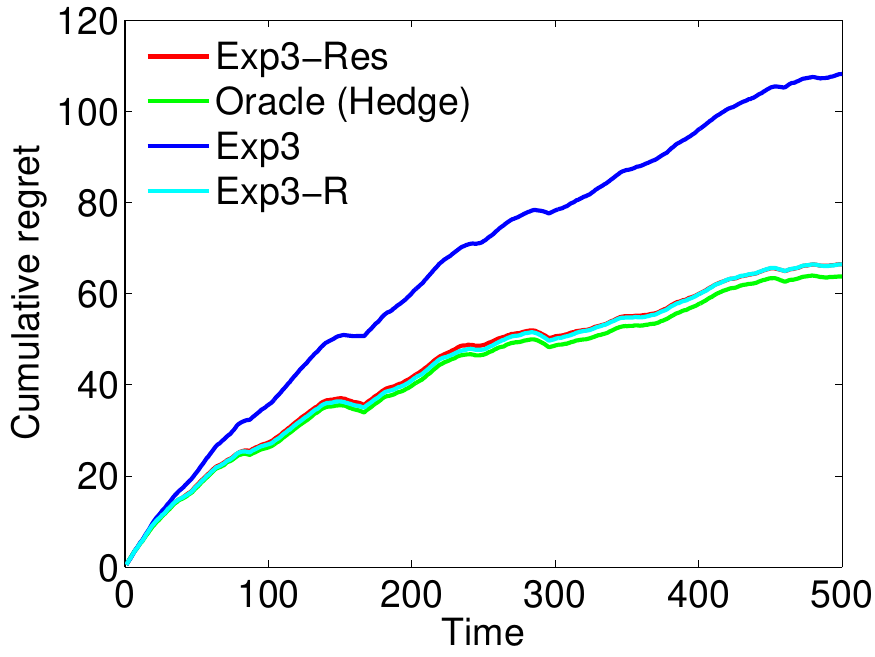}
		\caption{Sequence $(r_t)_t^T$ generated as a random walk on $[0,1]$}
	\end{subfigure}
	\
	\begin{subfigure}[t]{0.32\textwidth}
		\centering
		\includegraphics[width = 1\textwidth]{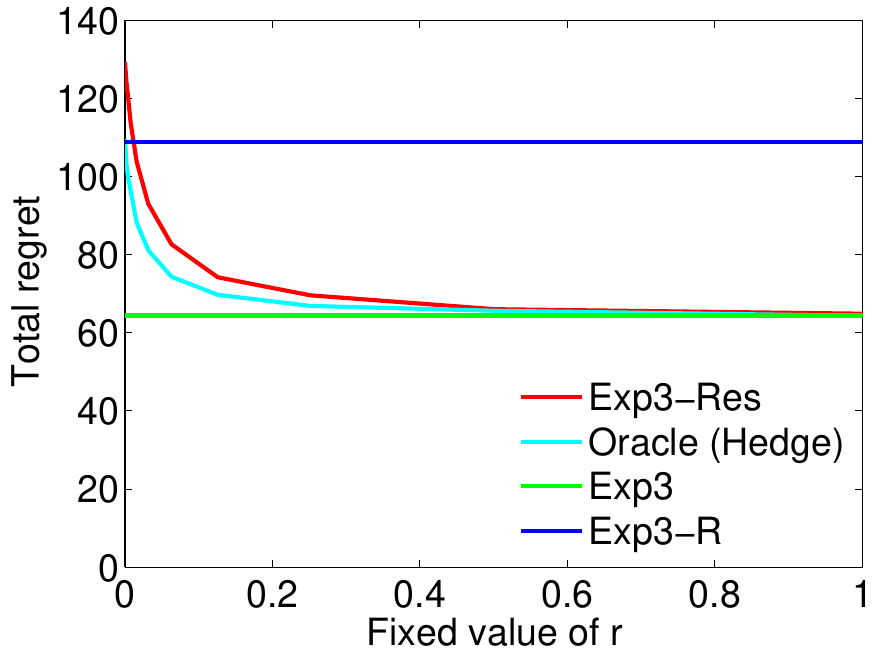}
		\caption{Total regret for different values of static $(r_t)_t^T$}
	\end{subfigure}
	\vskip 1em
	\caption{Comparison of algorithms for different amount of side information sequences (different sequences $(r_t)_t^T$)}
\end{figure*}


In this section, we study the empirical performance of \alg compared to three other algorithms:

\begin{itemize}
\item \texttt{Exp3} -- a basic adversarial multi-armed bandit algorithm which uses only loss observations of chosen arms and discards all 
side observations.
\item \exporacle\xspace -- full-information algorithm with access to losses of every action in every time step, regardless of the value of 
$r_t$. Our particular choice is \hedge \citep{littlestone1994weighted,freund1997decision}.
\item \expr\ -- a variant of the \alg algorithm with access to the sequence $(r_t)_t^T$, using (\ref{eq:optest}) to construct unbiased loss estimate instead of using geometric resampling.
\end{itemize}

The most interesting parameter of our experiment is the sequence $\pa{r_t}$, since it controls amount of 
side observation presented to the learner. In order to show that \alg can effectively make use of  the additional information provided by 
the environment, we designed several sequences $\pa{r_t}$ with different amounts of side observation provided to the learner. In the 
case of small $r_t$-s, the problem is almost as difficult as the multi-armed bandit problem. On the 
other hand, in the case of large $r_t$-s, the problem is almost as easy as the full-information problem.
Therefore, we expect that the performance of \alg will interpolate between the performance of the \expr and \exporacle 
algorithms depending on the values of the $r_t$-s. In the next section, we validate this claim empirically.

\subsection{EXPERIMENT DETAILS}
To ensure sufficient challenge for the algorithms, we have generated a sequence of losses as a random walk 
for each arm with independent increments uniformly distributed on $[-0.1,0.1]$ while enforcing the random walks to stay within $[0,1]$ by setting the value of a random walk to 0 or 1, respectively, if the random walk gets outside the boundaries. The loss sequence is 
fixed through all of the experiments to demonstrate the impact of the sequence $(r_t)_t^T$ on the regret of algorithms. We have 
observed qualitatively similar behavior for other loss sequences. 

We fix the number of arms in all of the experiments as $50$, and the time horizon as $500$. Every curve represents an average of 
100 runs.

\subsection{RESULT OF THE EXPERIMENTS}
We performed experiments on many different loss sequences and sequences of $r_t$-s. 
Since the results are essentially the same for all the different sequences, we included in the present paper just the results for one loss sequence with 
different sequences of $r_t$-s.
In the case of $r_t\geq\log(T)/(2N-2)$, the case of high probability of having some side observation, the performance of the algorithm \alg 
proposed in the present paper is comparable to the performance of the idealistic \expr which knows exact value of $r_t$ in every time step. 
Moreover, if the average $r_t$ is close to 1, the performance of the proposed algorithm is close to the performance of \exporacle which 
observes all the losses. If the average $r_t$ is close to zero, the performance of the algorithm is a little bit worse than the performance of basic 
\exph. This is also supported by the theory, since our algorithm is not able to construct reliable estimates in the case of small $r_t$-s.

\section{CONCLUSION \& FUTURE WORK}\label{sec:conc}

In this paper, we considered multi-armed bandit problems with stochastic side observations modeled by Erd\H os--R\'enyi graphs. Our 
contribution is a computationally efficient algorithm that operates under the assumption $r_t \ge \log T/(2N-2)$, which essentially 
guarantees that at least one piece of side observation is generated in every round, with high probability. In this case, our algorithm 
guarantees a regret bound of $\cO\pa{\sqrt{\log N\sumT \frac{1}{r_t} }}$ (Theorem~\ref{thm:alg}). In this section, we discuss several open 
questions regarding this result.


The most obvious question is whether it is possible to remove our assumptions on the values of $r_t$. We can only give a definite answer in 
the simple case when all $r_t$'s are identical: In this case, one can think of simply computing the empirical frequency $\widehat{r}_t$ of all 
previous side observations in round $t$ to estimate the constant $r$, plug the result into~\eqref{eq:optest}, and then use the 
resulting loss estimates in an exponential-weighting scheme. It is relatively straightforward (but also rather tedious) to show that the 
resulting algorithm satisfies a regret bound of $\wt{\cO}\pa{\sqrt{T/r}}$ for all possible values of $r$, thanks to the fact that $\hat{r}_t$ 
quickly concentrates around the true value of $r$. Notice however that this approach clearly breaks down if the $r_t$'s change over time.

In the case of changing $r_t$'s, the number of observations we can use to estimate $r_t$ is severely limited, so much that we cannot expect 
any direct estimate of $r_t$ to concentrate around the true value. Our algorithm proposed in Section~\ref{sec:alg} gets around this problem 
by directly estimating the importance weights $1/o_{t,i}$ instead of $r_t$, which enables us to construct reliable loss estimates, 
although only at the price of our assumption on the range of $r_t$. While we acknowledge that this assumption can be difficult to confirm a 
priori in practice, we remark that we find it quite surprising that \emph{any algorithm whatsoever} can take advantage of 
such limited observations, even under such a restriction. We also point out that for values of $r_t$ that are consistently below our bound, 
it is not possible to substantially improve the regret bounds of \exph which are of $\wt{\cO}\pa{\sqrt{TN}}$, as shown by the lower bounds of 
\citet{alon2013from}. We expect that in several practical applications, one can verify whether the $r_t$'s satisfy our assumption or not, 
and decide to use \alg or \exph accordingly. In fact, our experiments suggest that our algorithm performs well even if neither of these two 
assumptions are verified: we have seen that the empirical performance of \alg is only slightly worse than that of \exph even when 
the values of $r_t$ are very small (Section \ref{sec:experiments}).
Still, finding out whether our restriction on $r_t$ can be relaxed in general 
is a very important and interesting question left for future study.

An important corollary of our results is that, under some assumptions, it is possible to leverage side observations in a non-trivial 
way without having access to the second neighborhoods in the side-observation graphs as defined by \citet{mannor2011from}. This complements 
the recent results of \citet{CHK16}, who show that non-stochastic side-observations may provide non-trivial advantage over bandit feedback 
when the losses are stochastic even when the side-observation graphs are unobserved, but learning with unobserved feedback graphs can be 
as hard as learning with bandit feedback when both the losses and the graphs are generated by an adversary. A natural question that our 
work leads to is whether it is possible to efficiently leverage side-observations under significantly weaker assumptions on the observation 
model.

\paragraph{Acknowledgements}
\label{sec:Acknowledgements}
The research presented in this paper was supported by CPER Nord-Pas de Calais/FEDER DATA Advanced data science and technologies 2015-2020, French Ministry of
Higher Education and Research, Nord-Pas-de-Calais Regional Council,  French National Research Agency project ExTra-Learn (n.ANR-14-CE24-0010-01), 
and by UPFellows Fellowship (Marie Curie COFUND program n${^\circ}$ 600387).

\bibliography{library}

\begin{thebibliography}{}

\bibitem[Allenberg et~al., 2006]{allenberg2006hannan}
Allenberg, C., Auer, P., Gy{\"{o}}rfi, L., and Ottucs{\'{a}}k, {\relax Gy}.
  (2006).
\newblock {Hannan consistency in on-line learning in case of unbounded losses
  under partial monitoring}.
\newblock In {\em Algorithmic Learning Theory (ALT)}, pages 229--243.

\bibitem[Alon et~al., 2015]{alon2015online}
Alon, N., Cesa-Bianchi, N., Dekel, O., and Koren, T. (2015).
\newblock {Online learning with feedback graphs: Beyond bandits}.
\newblock In {\em Conference on Learning Theory (COLT)}.

\bibitem[Alon et~al., 2013]{alon2013from}
Alon, N., Cesa-Bianchi, N., Gentile, C., and Mansour, Y. (2013).
\newblock {From bandits to experts: A tale of domination and independence}.
\newblock In {\em Neural Information Processing Systems (NeurIPS)}.

\bibitem[Audibert and Bubeck, 2010]{audibert2010regret}
Audibert, J.-Y. and Bubeck, S. (2010).
\newblock {Regret bounds and minimax policies under partial monitoring}.
\newblock {\em Journal of Machine Learning Research}, 11:2785--2836.

\bibitem[Auer et~al., 2002a]{auer2002nonstochastic}
Auer, P., Cesa-Bianchi, N., Freund, Y., and Schapire, R.~E. (2002a).
\newblock {The non-stochastic multi-armed bandit problem}.
\newblock {\em SIAM Journal on Computing}, 32(1):48--77.

\bibitem[Auer et~al., 2002b]{auer2002adaptive}
Auer, P., Cesa-Bianchi, N., and Gentile, C. (2002b).
\newblock {Adaptive and self-confident on-line learning algorithms}.
\newblock {\em Journal of Computer and System Sciences}, 64:48--75.

\bibitem[Buccapatnam et~al., 2014]{buccapatnam2014stochastic}
Buccapatnam, S., Eryilmaz, A., and Shroff, N.~B. (2014).
\newblock {Stochastic bandits with side observations on networks}.
\newblock In {\em International Conference on Measurement and Modeling of
  Computer Systems}.

\bibitem[Caron et~al., 2012]{caron2012leveraging}
Caron, S., Kveton, B., Lelarge, M., and Bhagat, S. (2012).
\newblock {Leveraging side observations in stochastic bandits.}
\newblock In {\em Conference on Uncertainty in Artificial Intelligence (UAI)}.

\bibitem[Carpentier and Valko, 2016]{carpentier2016revealing}
Carpentier, A. and Valko, M. (2016).
\newblock {Revealing graph bandits for maximizing local influence}.
\newblock In {\em International Conference on Artificial Intelligence and Statistics (AISTATS)}, pages 10--18.

\bibitem[Cesa-Bianchi and Lugosi, 2006]{cesa-bianchi2006prediction}
Cesa-Bianchi, N. and Lugosi, G. (2006).
\newblock {\em {Prediction, learning, and games}}.
\newblock Cambridge University Press, New York, NY.

\bibitem[Cesa-Bianchi et~al., 2005]{cesa-bianchi2005minimizing}
Cesa-Bianchi, N., Lugosi, G., and Stoltz, G. (2005).
\newblock {Minimizing regret with label efficient prediction}.
\newblock {\em IEEE Transactions on Information Theory}, 51(6):2152--2162.

\bibitem[Cohen et~al., 2016]{CHK16}
Cohen, A., Hazan, T., and Koren, T. (2016).
\newblock Online learning with feedback graphs without the graphs.
\newblock In {\em International Conference on Machine Learning (ICML)}.

\bibitem[Freund and Schapire, 1997]{freund1997decision}
Freund, Y. and Schapire, R.~E. (1997).
\newblock {A decision-theoretic generalization of on-line learning and an
  application to boosting}.
\newblock {\em Journal of Computer and System Sciences}, 55:119--139.

\bibitem[{Gy}{\"{o}}rfi and Ottucs{\'{a}}k, 2007]{gyorfi2007sequential}
{Gy}{\"{o}}rfi, L. and Ottucs{\'{a}}k, {\relax Gy}. (2007).
\newblock {Sequential prediction of unbounded stationary time series}.
\newblock {\em IEEE Transactions on Information Theory}, 53(5):1866--1872.

\bibitem[Koc{\'{a}}k et~al., 2016]{kocak2016online}
Koc{\'{a}}k, T., Neu, G., and Valko, M. (2016).
\newblock {Online learning with noisy side observations}.
\newblock In {\em International Conference on Artificial Intelligence and Statistics (AISTATS)}, pages 1186--1194.

\bibitem[Koc{\'{a}}k et~al., 2014]{kocak2014efficient}
Koc{\'{a}}k, T., Neu, G., Valko, M., and Munos, R. (2014).
\newblock {Efficient learning by implicit exploration in bandit problems with
  side observations}.
\newblock In {\em Neural Information Processing Systems (NeurIPS)}, pages 613--621.

\bibitem[Littlestone and Warmuth, 1994]{littlestone1994weighted}
Littlestone, N. and Warmuth, M. (1994).
\newblock {The weighted majority algorithm}.
\newblock {\em Information and Computation}, 108(2):212--261.

\bibitem[Mannor and Shamir, 2011]{mannor2011from}
Mannor, S. and Shamir, O. (2011).
\newblock {From bandits to experts: On the value of side-observations}.
\newblock In {\em Neural Information Processing Systems (NeurIPS)}.

\bibitem[Neu and Bart{\'{o}}k, 2013]{neu2013efficient}
Neu, G. and Bart{\'{o}}k, G. (2013).
\newblock {An efficient algorithm for learning with semi-bandit feedback}.
\newblock In {\em Algorithmic Learning Theory (ALT)}.

\bibitem[Seldin et~al., 2014]{seldin2014prediction}
Seldin, Y., Bartlett, P., Crammer, K., and Abbasi-Yadkori, Y. (2014).
\newblock {Prediction with limited advice and multiarmed bandits with paid
  observations}.
\newblock In {\em International Conference on Machine Learning (ICML)}.

\end{thebibliography}
\bibliographystyle{apalike}

\end{document}